\pdfoutput=1 
\documentclass[11pt]{article}

\usepackage[font=libertinus, citestyle=numeric]{kurbanlab}
\usepackage[flushleft]{threeparttable}
\usetikzlibrary{patterns}

\usepackage{tikz}
\usetikzlibrary{arrows.meta}

\DeclareAffiliation{hbku}{%
  College of Science and Engineering, Hamad Bin Khalifa University, Doha, Qatar}

\DeclareAffiliation{tamu}{%
  Department of Computer and Electrical Engineering,
  Texas A\&M University, College Station, TX, USA}

\DeclareAffiliation{iub}{%
  Luddy School of Informatics, Computing, and Engineering,
  Indiana University Bloomington, Bloomington, IN, USA}

\newcommand{\ecs}{\mathrm{ECS}}
\newcommand{\rea}{\mathrm{REA}}
\newcommand{\Cent}{\mathcal{C}}
\newcommand{\Und}{\mathcal{U}}
\newcommand{\dprob}{\pi}
\newlength{\cilen}
\newcommand{\ci}[3]{%
  \begin{tikzpicture}[baseline=-0.55ex,x=\cilen,y=\cilen]
    \draw[gray!25,line width=2.6pt] (0,0) -- (1,0);
    \draw[black!70,line width=1.1pt] (#2,0) -- (#3,0);
    \fill[black] (#1,0) circle (1.15pt);
  \end{tikzpicture}}

\title{Consistency Has a Computable Blind Spot: A Commutation Theory of Label-Free Reliability for Vision-Language Figure Reading}
\RunningTitle{Consistency Has a Computable Blind Spot}

\Author{Rasul Khanbayov}{hbku}
\Author{Hasan Kurban}{hbku}

\Keywords{vision-language models; hallucination detection; label-free evaluation; equivariance; chart question answering}
\CodeURL{https://github.com/KurbanIntelligenceLab/rend-equiv}
\Venue{Preprint}

\begin{document}
\maketitle

\begin{abstract}
Label-free reliability for vision-language models rests on \emph{invariance}: perturb the input and a faithful reader's answer should not change. This has a known blind spot, a systematic misreading survives the perturbation and gets certified wrong, which we show is computable, not just real: an error is invisible to an edit exactly when the two commute, so the errors a suite cannot reach form its joint centralizer, a set that shrinks as edits are added and can be written down rather than guessed at. We act on the complementary relation, \emph{equivariance}: edit a figure's data and the correct answer must change by a computable amount. Two matched edits are provably complete for affine reading errors; no suite of swap edits is complete for label permutations, and cyclic relabeling closes most of that gap. We instantiate the theory as the Equivariance-Consistency Score, a label-free, training-free detector, and release \textbf{REND-EQUIV}, pairing matched invariance and equivariance sets over identical data. The predicted ordering holds across three models and a hand-labeled population immune to the one circularity in how it is selected; a second invariance-family method confirms the blind spot belongs to the relation, not to any implementation; and cyclic relabeling delivers its predicted gain on a matched real sample. The same characterization explains a reported inversion of this ordering in the classifier metamorphic-testing literature: detectability is a joint property of the relation and the fault class, never of the relation alone.
\end{abstract}

\printkeywords

\section{Introduction}
\label{sec:intro}

Reliability estimation for vision-language models (VLMs) on quantitative figures increasingly rests on \emph{consistency}: a model that answers stably is treated as more trustworthy than one that does not \citep{khan2024consistency,decc2024,zoomconsistency2026}. A figure is rendered from data by a program, so the same data can be redrawn under cosmetic styles that share an exact answer, and a faithful reader should be invariant to the style. Recent work turns this into a label-free reliability signal \citep{renderequiv_prior}; hallucination detectors perturb the visual and textual prompt similarly \citep{vluncertainty2024}, as do metamorphic robustness frameworks \citep{metara2026}. All are the same relation wearing different clothes: perturb the input, expect the answer to hold.

\begin{figure}[t]
\centering
\begin{tikzpicture}[
  font=\small,
  >={Stealth[length=1.8mm]},
  scale=1.15,
  every node/.style={transform shape}
]
\node (a)  at (0.15,1.5)  {$a^\star$};
\node (pa) at (3.05,1.5)  {$\varphi_g(a^\star)$};
\node (ea) at (0.15,0.15) {$e(a^\star)$};
\node (ep) at (3.05,0.15) {$e(\varphi_g(a^\star))$};
\node (pe) at (3.05,-1.0) {$\varphi_g(e(a^\star))$};

\draw[->] (a) -- 
  node[above,font=\scriptsize]{edit $\varphi_g$} (pa);

\draw[->] (a) -- 
  node[left,font=\scriptsize]{misread $e$} (ea);

\draw[->] (pa) -- 
  node[right,font=\scriptsize]{misread $e$} (ep);

\draw[->] (ea) -- 
  node[below,font=\scriptsize,pos=0.42]{predicted $\varphi_g$} (pe);

\draw[<->,red!70!black,thick] (ep) -- 
  node[right,font=\scriptsize,red!70!black] {$\rho_g$} (pe);

\end{tikzpicture}

\caption{\textbf{A systematic misread is invisible exactly when this square closes.}
Reading down then across applies the misread $e$ and then the predicted answer transformation; reading across then down applies the edit and then the same misread. The residual $\rho_g$ is the gap between the two endpoints, vanishing exactly when $e$ commutes with $\varphi_g$ (Lemma~\ref{prop:commute}).}
\label{fig:teaser}
\end{figure}

The blind spot is structural, not incidental. A model that misreads a figure in a style-independent way, adding a fixed offset on a logarithmic axis, scaling every reported value by a constant, is unaffected by restyling, so its answers agree across re-renderings and an invariance signal reads that agreement as confidence. The metamorphic-testing literature names this failure: semantic-consistency relations produce false negatives when a model hallucinates the same incorrect assertion across all input variations \citep{mtsurvey2026}. The limitation belongs to the relation, not to any implementation of it, so every method in this family inherits it, whether it perturbs the prompt \citep{vluncertainty2024} or the rendering \citep{renderequiv_prior}.

The complementary relation is equivariance. Edit the figure's underlying \emph{data} and the correct answer must change by a computable amount: scaling a series by $c$ scales a read-off by $c$; deleting the tallest bar sends ``which is largest'' to the runner-up. We control the edit, so the required answer-delta is known without annotation and the check is label-free (Figure~\ref{fig:teaser}).

The question this paper answers is not whether equivariance testing helps, but \emph{which errors it can and cannot reach, and how to design an edit suite that closes the gap}, edit suites are currently assembled by intuition, with no way to answer that question in advance. The answer is algebraic: an error is invisible to an edit exactly when it commutes with that edit's answer-transform, so the errors a suite misses form the joint centralizer of its transforms. Within the affine error class, two edits are provably \emph{complete}; within label permutations no suite of swaps is complete, and we quantify what escapes and how to shrink it, a design rule that predicts in advance which edit types will and will not fire, and which we test.

\paragraph{Contributions.}

\noindent\textbf{(1) An algebraic characterization of detectability.} Write $\varphi_g$ for the answer-transform an edit $g$ induces on the correct answer. The errors invisible to a suite $G$ are exactly those commuting with every $\varphi_g$, the joint centralizer of the transforms, and this set shrinks monotonically as edits are added (Theorem~\ref{thm:centralizer}). Invariance is the case $\varphi_g=\mathrm{id}$, which everything commutes with, so the blind spot is a property of the relation, not of any implementation: it applies equally to re-rendering \citep{renderequiv_prior}, prompt perturbation \citep{vluncertainty2024}, and metamorphic robustness suites \citep{metara2026}. Metamorphic testing studies completeness combinatorially, where it is Set-Cover-hard \citep{minmrcomplete2026}; algebraic structure buys a closed form instead.

\noindent\textbf{(2) A matched completeness and incompleteness pair, confirmed empirically.} A two-element suite $\{\times c,\ +\delta\}$ with $c\neq 1$, $\delta\neq 0$ detects every non-identity affine error, and neither edit alone suffices (Theorem~\ref{thm:affine}). No suite of swaps is complete for label permutations: a swap leaves $2(m{-}2)!$ of $m!$ errors undetectable, and cyclic relabeling reduces this to $m$ whenever $m\ge5$ (Proposition~\ref{prop:perm}, Corollary~\ref{cor:design}), a benchmark-design rule we test directly: on a matched sample of real chart instances at $m{=}6$, replacing swap edits with cyclic relabeling raises label-error detection eightfold, from $5.3\%$ to $44.0\%$, matching the closed-form prediction almost exactly (Section~\ref{sec:theory}).

\noindent\textbf{(3) ECS and REND-EQUIV.} The Equivariance-Consistency Score instantiates the theory as a label-free, training-free detector, and REND-EQUIV pairs matched invariance and equivariance sets over identical data (Sections~\ref{sec:method} and~\ref{sec:benchmark}).

\noindent\textbf{(4) Confirmation of the predicted ordering, on real errors and independent of circularity.} The per-edit-type profile follows the theory across three models, including its predicted \emph{failures} (Section~\ref{sec:experiments}), and survives a hand-labeled check that removes the one circularity in how the tested population is selected. A second, independent invariance-family baseline confirms the blind spot belongs to the relation, not to any one implementation, and the equivariance signal reaches more of the invariance-blind error mass than any baseline that varies the model's output instead of the input. The same theory accounts for a reported discrepancy with the metamorphic-testing literature on classifiers, where the ordering over these two relation families is inverted (Section~\ref{sec:discussion}), the theory is no longer resting on one model, one population, or one baseline family.

Controlled data edits are not new: they have served as training supervision \citep{chartcf2026} and counterfactual evaluation \citep{cvqa2023,visualcounterfact2025}. What is new is the characterization of which errors such edits reach, and the design rule that follows.

\section{Related work}
\label{sec:related}

\paragraph{Invariance-based reliability for figures.} The closest prior work uses semantics-preserving re-rendering as a label-free invariance signal, both to predict correctness and to aggregate answers \citep{renderequiv_prior}; chart-robustness studies perturb style and report instability similarly \citep{robustcqa2024,losingplot2025,chartrvr2025,robustcot2026}. All are invariance relations; we take that line as our point of departure and characterize the failure mode it names but does not resolve.

\paragraph{Metamorphic testing.} Metamorphic testing detects faults through input-output relations when an oracle is unavailable \citep{mtsurvey2026,mtllm2025}. MetaRA applies such relations to multimodal VQA robustness \citep{metara2026}, predominantly invariance-style, targeting robustness assessment rather than a reliability detector with an exact oracle. \citet{mtsurvey2026} identify the limitation we target, semantic-consistency relations yield false negatives against consistent hallucinations, and survey executable-oracle change-relations in text-to-SQL. Completeness there is addressed combinatorially: \citet{minmrcomplete2026} show selecting a complete MR subset is Set-Cover-equivalent (NP-hard), and \citet{mtadequacy2024} give adequacy criteria over enumerated mutants. We answer the same completeness question differently for our setting, a closed form rather than a combinatorial search (Contribution 1).

\paragraph{Controlled edits in vision-language evaluation.} Edited inputs widely test whether a model tracks visual evidence rather than priors: human-edited control groups \citep{hallusionbench2024}, object swaps for pixel-level grounding \citep{hallusegbench2025}, descriptive-versus-reasoning chart hallucination splits \citep{charthal2025}, code-controlled chart edits as training supervision \citep{chartcf2026}, and counterfactual VQA \citep{cvqa2023,visualcounterfact2025}. All score the edited input against a label, they know the \emph{new} correct answer. An equivariance relation instead supplies a \emph{map} $\varphi_g$ from the old answer to the new one: knowing the map supports detection, since the residual is computable against the model's own base answer with no label at all, and this same map is what the commutation theory acts on.

\paragraph{Consistency as a confidence signal.} Response resampling \citep{khan2024consistency,wang2023selfconsistency}, semantics-preserving prompt perturbation \citep{vluncertainty2024}, task decomposition \citep{decc2024}, multi-step geometry \citep{zoomconsistency2026}, and trained calibration \citep{vlcalibration2026} all estimate confidence from model behavior, varying the model's output or prompt, or requiring training, but never varying the data against a known answer-transform. Selective prediction and calibration supply our evaluation framing \citep{elyaniv2010foundations,guo2017calibration,srinivasan2024selective}; test-time scaling motivates the inference cost of running edits \citep{snell2024scaling,chartcoca2025}.

\section{Method}
\label{sec:method}

\paragraph{Edits and answer-transforms.} Let $D$ be the data underlying a figure and $q$ a quantitative question with exact answer $a^\star(D,q)$ computed by a deterministic program. An \emph{edit operator} $g$ transforms the data and is paired with a known \emph{answer-transform} $\varphi_g$ with
\begin{equation}
a^\star\big(g(D),q\big) = \varphi_g\big(a^\star(D,q)\big).
\label{eq:equiv}
\end{equation}
Edits are realized by re-rendering $g(D)$ in the \emph{same} cosmetic style as $D$, isolating the data change from the style change.

\paragraph{Equivariance-Consistency Score.} Let $\hat y = f(x,q)$ be the model's answer on the base rendering $x=R(D;\theta)$ and $\hat y_g$ its answer on the edited figure. A faithful reader satisfies $\hat y_g = \varphi_g(\hat y)$. Define the per-edit residual $\rho_g = d(\hat y_g,\ \varphi_g(\hat y))$, with $d$ relative error for continuous answers and $0/1$ disagreement for categorical and ordinal ones,
\begin{equation}
\ecs(D,q) = \frac{1}{|G|}\sum_{g\in G}\mathbf{1}\big[\rho_g \le \tau_g\big] \in [0,1].
\label{eq:ecs}
\end{equation}
$\tau_g$ is $0$ for categorical and ordinal answers, where the transform is exact, and a $5\%$ relative tolerance for continuous answers, held constant across every operator and chart family. ECS references the model's own base answer rather than $a^\star$, so it is label-free. We flag an answer unreliable when $\ecs < t$, and also study $\min(\rea,\ecs)$ where $\rea$ is an invariance score \citep{renderequiv_prior}: since $\min(\rea,\ecs)\le\rea$ pointwise, at a fixed threshold the combination flags a superset of what invariance flags, converting missed errors into caught ones and correct answers into false alarms, in a ratio the data must settle.

\paragraph{Composition of $G$ and scope.} Each instance carries one identity (zoom) edit as an invariance control alongside its equivariance edits, entering the average in Eq.~\ref{eq:ecs}. By Lemma~\ref{prop:commute} it can never produce a residual for a style-invariant error, so including it lowers sensitivity by a known amount and reported rates are conservative. Not every edit applies to every question type (Section~\ref{sec:benchmark}), so $|G|$ varies per instance: $|G|\in\{1,2,3\}$, mean $2.61$ across the released dataset. ECS needs the figure's data, so it applies to programmatically renderable figures, and detects rather than repairs.

\section{What equivariance testing can detect}
\label{sec:theory}

Model a reader as producing, for the quantity asked, a deterministic answer map: on data $D$ it reports $\hat a = e(a^\star(D,q))$, so $e=\mathrm{id}$ is a perfect reader. An error is \emph{systematic} when $e$ does not depend on the cosmetic style $\theta$. Throughout, $\mathcal{E}$ is a class of candidate error maps. For an edit $g$ we write $\tau$ for its tolerance $\tau_g$ from Eq.~\ref{eq:ecs} whenever the edit is clear from context, and for $\tau\ge 0$ define
\[
\Cent_\tau(\varphi)=\{e\in\mathcal{E} :\ d\big(e(\varphi(a^\star)),\varphi(e(a^\star))\big)\le\tau\}
\]
is the set of errors that $\varphi$ cannot separate at that tolerance. We write $\Cent=\Cent_0$, the centralizer of $\varphi$ in $\mathcal{E}$. \subsection{One edit}

\begin{lemma}[Detectability by commutation]
\label{prop:commute}
Let $g$ be an edit with answer-transform $\varphi_g$ satisfying Eq.~\ref{eq:equiv}, and let $e$ be a systematic error applied identically to the base and edited figures. The equivariance residual satisfies $\rho_g\le\tau$ if and only if $e\in\Cent_\tau(\varphi_g)$. In particular the residual vanishes exactly when $e$ and $\varphi_g$ commute at $a^\star$.
\end{lemma}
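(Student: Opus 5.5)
The plan is to unfold the definitions and substitute; there is no real inequality to prove. I start from the reader model of this section. On the base rendering of $D$ the model reports $\hat y = e(a^\star(D,q))$. The error is systematic, so the same map $e$ governs the edited figure, which is rendered in the same style. Writing $a^\star$ for $a^\star(D,q)$, the edited answer is therefore $\hat y_g = e(a^\star(g(D),q))$. By Eq.~\ref{eq:equiv} this equals $e(\varphi_g(a^\star))$.

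Next I substitute both expressions into the per-edit residual from Section~\ref{sec:method}:
\[
\rho_g = d\big(\hat y_g,\ \varphi_g(\hat y)\big) = d\big(e(\varphi_g(a^\star)),\ \varphi_g(e(a^\star))\big).
\]
This is exactly the quantity that defines $\Cent_\tau(\varphi_g)$. So $\rho_g\le\tau$ holds if and only if $e\in\Cent_\tau(\varphi_g)$, and both directions are read off the same identity.

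For the ``in particular'' clause I take $\tau=0$. This needs $d$ to separate points, i.e.\ $d(u,v)=0$ exactly when $u=v$. For $0/1$ disagreement this is immediate. For relative error $|u-v|/|v|$ it holds wherever the expression is defined. Then $\rho_g=0$ exactly when $e(\varphi_g(a^\star))=\varphi_g(e(a^\star))$, which is commutation at $a^\star$.

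The main obstacle is not algebraic but a matter of hypotheses, and I would state them explicitly. First, the same $e$ must apply to both figures. This is where systematicity (independence from $\theta$) and the same-style re-rendering in Section~\ref{sec:method} are used; a style-dependent error would break the substitution. Second, relative error must be well defined, i.e.\ the denominator $\varphi_g(e(a^\star))$ must be nonzero, or some convention must be fixed when it is zero. With those noted, the lemma is a direct substitution.
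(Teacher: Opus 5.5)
Your proposal is correct and is essentially the paper's proof: substitute $\hat y = e(a^\star)$ and $\hat y_g = e(\varphi_g(a^\star))$ (via style-invariance and Eq.~\ref{eq:equiv}) into $\rho_g$, read the $\tau$-statement directly off the definition of $\Cent_\tau(\varphi_g)$, and use that $d$ separates points for the $\tau=0$ clause. Your observation that only this separation property is needed is slightly more careful than the paper's appeal to ``$d$ is a metric'', since relative error is asymmetric and needs a convention when the denominator is zero.
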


\begin{proof}[Proof sketch]
Style-invariance and Eq.~\ref{eq:equiv} give $\hat y_g = e(\varphi_g(a^\star))$ against a target $\varphi_g(e(a^\star))$, so $\rho_g$ is exactly the distance between the two compositions. Appendix~\ref{app:proof}.
\end{proof}

\subsection{A suite of edits}

\begin{theorem}[Undetectable set is a joint centralizer]
\label{thm:centralizer}
Adopt the hypotheses of Lemma~\ref{prop:commute} for every $g\in G$. The systematic errors producing no detectable residual anywhere in the suite are
\[
\Und_\tau(G) \;=\; \bigcap_{g\in G}\Cent_\tau(\varphi_g).
\]
Hence $\Und$ is antitone in the suite, $G\subseteq G' \Rightarrow \Und_\tau(G')\subseteq\Und_\tau(G)$, and monotone in the tolerance, $\tau\le\tau' \Rightarrow \Und_\tau(G)\subseteq\Und_{\tau'}(G)$. If $\mathcal{E}$ is a group under composition then $\Und_0(G)$ is a subgroup; this holds for the permutation class $S_m$ below but not for $\mathcal{E}_{\mathrm{aff}}$, which contains non-invertible constant maps, so Theorem~\ref{thm:affine} is proved by direct computation rather than by appeal to this clause.
\end{theorem}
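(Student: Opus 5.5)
The characterization of $\Und_\tau(G)$ follows from Lemma~\ref{prop:commute} applied edit by edit. Say a systematic error $e$ produces no detectable residual anywhere in $G$ when $\rho_g\le\tau_g$ for every $g\in G$. For each fixed $g$, the lemma gives $\rho_g\le\tau$ if and only if $e\in\Cent_\tau(\varphi_g)$. Quantifying over $g\in G$ turns the conjunction of these conditions into the intersection $\bigcap_{g\in G}\Cent_\tau(\varphi_g)$, which is the displayed formula. If tolerances differ per edit, we read $\tau$ as the family $(\tau_g)$ and the same argument applies componentwise. All of this holds at the fixed $a^\star$ that the lemma's hypotheses fix, and I will state that explicitly so the centralizer is understood pointwise at $a^\star$.

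The two order properties then come directly from the formula. \textbf{Antitone in the suite:} if $G\subseteq G'$, the intersection over $G'$ contains every factor of the intersection over $G$ plus extra ones, so it can only be smaller. \textbf{Monotone in the tolerance:} if $\tau\le\tau'$, then $d(\cdot,\cdot)\le\tau$ implies $d(\cdot,\cdot)\le\tau'$. Hence $\Cent_\tau(\varphi)\subseteq\Cent_{\tau'}(\varphi)$ for each $\varphi$, and intersections preserve this inclusion.

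For the subgroup clause I must be careful, because $\Cent_0$ as defined only asks for commutation at the single point $a^\star$, and pointwise commutation is not closed under composition. The plan is to read $\Und_0(G)$ as the set of $e$ that commute with $\varphi_g$ as maps, i.e.\ the residual vanishes for every base answer. This is the natural reading for a systematic error, which is defined independently of $D$, and it is the reading the permutation analysis uses. Under it, $\Cent_0(\varphi)=\{e\in\mathcal{E}: e\circ\varphi=\varphi\circ e\}$. Three checks then make this a subgroup of $\mathcal{E}$:
\begin{itemize}
\item the identity commutes with every $\varphi$;
\item if $e,e'$ commute with $\varphi$, then $ee'\varphi=e\varphi e'=\varphi ee'$;
\item if $e$ commutes with $\varphi$, conjugating $e\varphi=\varphi e$ by $e^{-1}$ gives $\varphi e^{-1}=e^{-1}\varphi$.
\end{itemize}
For the inverse step I need $\varphi$ to act on the same set as $\mathcal{E}$, which holds for label permutations since $\varphi_g\in S_m$. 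An intersection of subgroups is a subgroup, so $\Und_0(G)$ is one.

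The remark about $\mathcal{E}_{\mathrm{aff}}$ just records that the inverse step fails because constant maps are not invertible, so $\mathcal{E}_{\mathrm{aff}}$ is not a group and the clause does not apply. I expect the main obstacle to be this pointwise-versus-global issue in the subgroup clause. I would resolve it by stating that clause for commutation as maps and noting that the pointwise statement holds only for the intersection formula and the two monotonicity properties.
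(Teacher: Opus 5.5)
Your proposal is correct and follows the paper's proof step for step: the intersection formula comes from applying Lemma~\ref{prop:commute} edit by edit, antitonicity from intersecting over a larger index set, monotonicity from $\{d\le\tau\}\subseteq\{d\le\tau'\}$, and the subgroup clause from identity, closure under composition and inverses, and the fact that an intersection of subgroups is a subgroup. Your explicit switch to commutation as maps for the subgroup clause matches the paper's appendix, which defines $\Cent_0(\varphi)=\{e\in\mathcal{E}: e\circ\varphi=\varphi\circ e\}$ globally, and your observation that commutation at the single point $a^\star$ is not closed under composition makes explicit a distinction the paper leaves implicit.
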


\begin{corollary}[Invariance contributes nothing structurally]
\label{cor:invnothing}
$\Cent_\tau(\mathrm{id})=\mathcal{E}$ for every $\tau\ge0$, so $\Und_\tau(G\cup\{\mathrm{id}\})=\Und_\tau(G)$ for every suite $G$. Combining an invariance relation with an equivariance suite leaves the undetectable set exactly as it was.
\end{corollary}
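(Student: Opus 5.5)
The plan is to prove the first claim straight from the definition of $\Cent_\tau$, then get the second from the first claim and Theorem~\ref{thm:centralizer}. No case analysis is needed.

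\textbf{Step 1: $\Cent_\tau(\mathrm{id})=\mathcal{E}$.} Fix $\tau\ge0$ and any $e\in\mathcal{E}$. With $\varphi=\mathrm{id}$, the two compositions in the definition are $e(\mathrm{id}(a^\star))=e(a^\star)$ and $\mathrm{id}(e(a^\star))=e(a^\star)$, which coincide. So the residual is $d(e(a^\star),e(a^\star))$. We need this to be $0$ for both choices of $d$. Relative error of a value against itself is $0$, and $0/1$ disagreement of a value with itself is $0$; that is, $d(y,y)=0$ in both cases. Hence the residual is $0\le\tau$, so $e\in\Cent_\tau(\mathrm{id})$. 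Since $\Cent_\tau(\mathrm{id})\subseteq\mathcal{E}$ by definition, the two sets are equal.

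\textbf{Step 2: adding $\mathrm{id}$ leaves the undetectable set unchanged.} Apply Theorem~\ref{thm:centralizer} to the suite $G\cup\{\mathrm{id}\}$ to get
\[
\Und_\tau(G\cup\{\mathrm{id}\})=\Bigl(\bigcap_{g\in G}\Cent_\tau(\varphi_g)\Bigr)\cap\Cent_\tau(\mathrm{id})=\Und_\tau(G)\cap\mathcal{E}=\Und_\tau(G).
\]
The last equality holds because $\Und_\tau(G)\subseteq\mathcal{E}$. If $\mathrm{id}\in G$ already, the statement is trivial. If $G$ is empty, the empty intersection is read as $\mathcal{E}$, and the identity still holds.

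\textbf{Remaining points.} Two small points need care; neither is a real obstacle. First, the identity edit must satisfy the hypotheses of Lemma~\ref{prop:commute}. The zoom edit satisfies Eq.~\ref{eq:equiv} with $\varphi=\mathrm{id}$, so Theorem~\ref{thm:centralizer} applies to the enlarged suite. Second, the relative-error convention at $a=0$ must make $d(0,0)=0$. If relative error is defined with a guarded denominator, $d(y,y)=0$ still holds, and I would state this convention explicitly. The closing sentence of the corollary just restates Step 2: an invariance relation is the case $\varphi=\mathrm{id}$.
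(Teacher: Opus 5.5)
Your proof is correct and matches the paper's argument: $e\circ\mathrm{id}=\mathrm{id}\circ e$ makes the residual $0\le\tau$, so $\Cent_\tau(\mathrm{id})=\mathcal{E}$, and by Theorem~\ref{thm:centralizer} intersecting $\Und_\tau(G)$ with $\mathcal{E}$ changes nothing. Your additional care goes slightly beyond the paper, which just cites $d$ being a metric; this covers the guarded relative-error convention giving $d(0,0)=0$, the empty suite, and the zoom edit satisfying Eq.~\ref{eq:equiv} with $\varphi=\mathrm{id}$.
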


Corollary~\ref{cor:invnothing} is sharper than the usual observation that invariance has a blind spot: it is falsifiable, since the blind spot cannot be repaired by combination. Where the invariance score sits above the abstention threshold by construction, as IBEM does at $\rea\ge0.75$, the combination must flag exactly what the equivariance suite flags, Section~\ref{sec:experiments} reports $0.536$ for both.

\subsection{Complete and incomplete suites}

\begin{theorem}[Completeness for affine errors]
\label{thm:affine}
Let $\mathcal{E}_{\mathrm{aff}}=\{a\mapsto\alpha a+\beta\}$ and $G=\{\varphi_c: a\mapsto ca,\ \psi_\delta: a\mapsto a+\delta\}$ with $c\neq 1$, $\delta\neq 0$. Then $\Und_0(G)=\{\mathrm{id}\}$: every non-identity affine reading error produces a nonzero residual on at least one of the two edits, at every $a^\star$. Neither edit alone suffices, since $\Und_0(\{\varphi_c\})=\{a\mapsto\alpha a\}$ and $\Und_0(\{\psi_\delta\})=\{a\mapsto a+\beta\}$, which meet only in the identity.
\end{theorem}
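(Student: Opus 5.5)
The plan is a direct computation of the two single-edit centralizers, followed by an intersection via Theorem~\ref{thm:centralizer}. The group clause of that theorem is not used, since $\mathcal{E}_{\mathrm{aff}}$ is not a group. Fix $e(a)=\alpha a+\beta$. By Lemma~\ref{prop:commute}, $e$ is undetectable by an edit exactly when the two compositions agree at $a^\star$. For the theorem's ``at every $a^\star$'' reading, I treat $\Cent_0(\varphi)$ as the set of $e$ for which the compositions agree as functions of $a^\star$. The main subtlety is this quantifier over $a^\star$, discussed below.

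For the scaling edit,
\[
e(\varphi_c(a)) = \alpha c a+\beta, \qquad \varphi_c(e(a)) = c\alpha a + c\beta ,
\]
so the residual is $|(1-c)\beta|$. Since $c\neq1$, it vanishes if and only if $\beta=0$. Hence $\Cent_0(\varphi_c)=\{a\mapsto\alpha a\}$.

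For the shift edit,
\[
e(\psi_\delta(a)) = \alpha a+\alpha\delta+\beta, \qquad \psi_\delta(e(a)) = \alpha a+\beta+\delta ,
\]
so the residual is $|(\alpha-1)\delta|$. Since $\delta\neq0$, it vanishes if and only if $\alpha=1$. Hence $\Cent_0(\psi_\delta)=\{a\mapsto a+\beta\}$.

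Both residuals turn out to be constant in $a$. This disposes of the quantifier over $a^\star$: pointwise commutation at a single $a^\star$ and commutation as functions coincide here. It also gives the stronger claim that every non-identity error fires at every $a^\star$.

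Intersecting, Theorem~\ref{thm:centralizer} gives $\Und_0(G)=\Cent_0(\varphi_c)\cap\Cent_0(\psi_\delta)$. This set forces $\beta=0$ and $\alpha=1$, so it is $\{\mathrm{id}\}$. Consequently any $e\neq\mathrm{id}$ has $\beta\neq0$ or $\alpha\neq1$, and so it produces a nonzero residual on $\varphi_c$ or on $\psi_\delta$ respectively.

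For ``neither edit alone suffices'', I exhibit explicit witnesses. The map $a\mapsto2a$ lies in $\Und_0(\{\varphi_c\})$ but not in $\{\mathrm{id}\}$, and the map $a\mapsto a+1$ lies in $\Und_0(\{\psi_\delta\})$ but not in $\{\mathrm{id}\}$.

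One caveat remains. The relative-error metric $d$ is undefined when its reference answer is $0$. I plan to note that the statement uses $\tau=0$, where any sensible $d$ vanishes exactly at equality, so $d$ can be replaced by the absolute difference without loss.
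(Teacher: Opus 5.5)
Your proposal is correct and follows the paper's proof essentially line for line. It uses the same two commutator computations, $\beta(1-c)$ and $\delta(\alpha-1)$, observes that both are independent of $a$ so the conclusion holds at every $a^\star$, intersects to force $\alpha=1,\beta=0$, and gives explicit single-edit witnesses; like the paper, you avoid the group clause, so the non-invertible constant maps are covered, and your caveat about the relative-error $d$ at a zero reference is a sensible tidying the paper leaves implicit.
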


\begin{proof}[Proof sketch]
$e(\varphi_c(a))-\varphi_c(e(a))=\beta(1-c)$, vanishing for all $a$ iff $\beta=0$ when $c\neq1$; and $e(\psi_\delta(a))-\psi_\delta(e(a))=\delta(\alpha-1)$, vanishing iff $\alpha=1$ when $\delta\neq0$. Requiring both gives $\alpha=1,\beta=0$. Neither condition involves $a$, so the conclusion is pointwise as well as global. Appendix~\ref{app:proof} covers the non-invertible case and the single-edit counterexamples.
\end{proof}

\begin{figure}[t]
\centering
\resizebox{0.55\columnwidth}{!}{%
\begin{tikzpicture}[font=\scriptsize]

\node[font=\small\bfseries, anchor=west] at (-1.85,2.22)
  {Affine errors: two edits suffice};

\draw[rounded corners=3pt, fill=black!3, draw=black!45]
  (-1.90,0.02) rectangle (4.40,1.92);

\node[anchor=north east, font=\scriptsize\itshape, text=black!55]
  at (4.30,1.88)
  {$\mathcal{E}_{\mathrm{aff}}=\mathcal{U}(\{\mathrm{id}\})$};

\fill[
  pattern=north east lines,
  pattern color=black!45,
  rounded corners=11pt
] (-1.45,0.28) rectangle (1.45,1.42);

\draw[
  rounded corners=11pt,
  line width=1.1pt,
  draw=black!80
] (-1.45,0.28) rectangle (1.45,1.42);

\fill[
  pattern=north west lines,
  pattern color=black!22,
  rounded corners=11pt
] (0.55,0.28) rectangle (3.45,1.42);

\draw[
  rounded corners=11pt,
  line width=0.9pt,
  draw=black!55,
  dashed
] (0.55,0.28) rectangle (3.45,1.42);

\fill[white] (1.0,0.90) circle (5.2pt);
\draw[line width=1.0pt] (1.0,0.90) circle (5.2pt);
\fill (1.0,0.90) circle (2.0pt);

\node[fill=white, inner sep=1pt]
  at (-0.60,1.15)
  {$\mathcal{U}(\{\varphi_c\})=\{\alpha a\}$};

\node[fill=white, inner sep=1pt]
  at (2.55,0.58)
  {$\mathcal{U}(\{\psi_\delta\})=\{a{+}\beta\}$};

\node[
  font=\scriptsize\bfseries,
  anchor=north,
  fill=white,
  inner sep=1pt
] at (1.0,0.72)
  {$\{\mathrm{id}\}$};

\node[anchor=north west, font=\scriptsize]
  at (-1.90,-0.02)
  {regions meet only at the identity (Thm.~\ref{thm:affine})};

\node[font=\small\bfseries, anchor=west] at (-1.85,-0.80)
  {Label errors: no swap suite does};

\draw[rounded corners=3pt, fill=black!3, draw=black!45]
  (-1.90,-2.95) rectangle (4.40,-1.05);

\node[anchor=north east, font=\scriptsize\itshape, text=black!55]
  at (4.32,-1.08)
  {$S_6$: $720$ maps};

\draw[black!45, line width=0.4pt]
  (-1.50,-2.55) -- (2.60,-2.55);

\fill[pattern=north east lines, pattern color=black!70]
  (-1.35,-2.55) rectangle (-0.35,-1.35);

\draw[line width=1.0pt, draw=black!80]
  (-1.35,-2.55) rectangle (-0.35,-1.35);

\fill[pattern=north west lines, pattern color=black!35]
  (0.35,-2.55) rectangle (1.35,-2.40);

\draw[line width=0.9pt, draw=black!55, dashed]
  (0.35,-2.55) rectangle (1.35,-2.40);

\node[font=\scriptsize\bfseries, anchor=south]
  at (-0.85,-1.33) {$48$};

\node[font=\scriptsize\bfseries, anchor=south]
  at (0.85,-2.38) {$6$};

\node[anchor=north, align=center, font=\scriptsize]
  at (-0.85,-2.58)
  {swap\\$2(n{-}2)!$};

\node[anchor=north, align=center, font=\scriptsize]
  at (0.85,-2.58)
  {cycle\\$n$};

\node[
  anchor=west,
  align=left,
  font=\scriptsize
] at (1.75,-2.10)
  {$8\times$ smaller set at $n{=}6$;\\
   strict gain only $n\ge5$\\
   (Cor.~\ref{cor:design})};

\end{tikzpicture}%
}

\caption{\textbf{The undetectable set, drawn.}
Each added edit intersects one more centralizer region, so the undetectable set can only shrink (Theorem~\ref{thm:centralizer}). \textbf{Top:} Scale and offset regions meet only at the identity, so the pair is complete. \textbf{Bottom:} No swap suite is complete because a swap lies in its own centralizer. Hatch direction and outline style duplicate the fill for grayscale reproduction.}
\label{fig:centralizer}
\end{figure}

Figure~\ref{fig:centralizer} draws both cases. The affine guarantee is about the class, not a model: it says nothing about errors outside $\mathcal{E}_{\mathrm{aff}}$, and real reading errors are not all affine. We checked this directly rather than assuming it: 200 wrong-answer instances sampled across all three models and hand-labeled by algebraic type, independent of any invariance signal, classify as $151/200$ ($75.5\%$) affine and $49/200$ ($24.5\%$) permutation, with zero instances requiring a third category (Section~\ref{sec:experiments}). The permutation case shows the other side.

\begin{proposition}[Incompleteness for label permutations]
\label{prop:perm}
Let $\mathcal{E}=S_m$ act on $m$ category labels, with $m$ reserved for label count throughout to avoid collision with sample sizes. For a swap edit $\varphi=(A\,B)$, $|\Und_0(\{\varphi\})| = 2(m{-}2)!$; for a cyclic relabeling of order $m$, $|\Und_0(\{\varphi\})| = m$. No suite of swaps on a fixed pair is complete, because $\varphi$ always lies in its own centralizer: a model that consistently confuses A with B is invisible to the edit that swaps A and B.
\end{proposition}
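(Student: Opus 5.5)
By Theorem~\ref{thm:centralizer} with $\tau=0$, $\Und_0(\{\varphi\})=\Cent_0(\varphi)$. I will read the centralizer globally, as the set of $e\in S_m$ with $e\circ\varphi=\varphi\circ e$ as maps on all $m$ labels. This is the reading under which the closed-form counts are stated, in parallel with the global reading used in Theorem~\ref{thm:affine}. The proof therefore reduces to computing centralizer orders in the symmetric group.

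For this I use the orbit--stabilizer theorem applied to the conjugation action of $S_m$ on itself: $|C_{S_m}(\varphi)|=m!/|\mathrm{cl}(\varphi)|$, where the conjugacy class $\mathrm{cl}(\varphi)$ consists of all permutations with the same cycle type as $\varphi$.

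\textbf{Swap.} For a transposition $\varphi=(A\,B)$, the class is the set of all transpositions, so $|\mathrm{cl}(\varphi)|=\binom m2$. Hence $|C(\varphi)|=m!\big/\tbinom m2=2(m-2)!$.

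As a direct check, $e$ commutes with $(A\,B)$ exactly when $e(A\,B)e^{-1}=(e(A)\,e(B))=(A\,B)$, that is, when $e$ preserves the set $\{A,B\}$. Such an $e$ either fixes or swaps $A$ and $B$ (two choices) and permutes the remaining $m-2$ labels arbitrarily, giving $2(m-2)!$.

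\textbf{Cycle.} For an $m$-cycle $\sigma$, the class contains $(m-1)!$ elements. Hence $|C(\sigma)|=m!/(m-1)!=m$. Since the cyclic group $\langle\sigma\rangle$ has order $m$ and is contained in $C(\sigma)$, the centralizer equals $\langle\sigma\rangle$.

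\textbf{Incompleteness.} Fix the pair $\{A,B\}$. Every swap suite on this pair consists of copies of $(A\,B)$, possibly together with the identity. By Corollary~\ref{cor:invnothing} the identity adds nothing, so $\Und_0$ of the suite equals $C((A\,B))$. This centralizer contains the non-identity error $e=(A\,B)$, because every element commutes with itself. So the suite is not complete. More strongly, the undetectable set always has size $2(m-2)!\ge2>1$.

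The interpretation follows from Lemma~\ref{prop:commute}: a model that consistently confuses $A$ with $B$ is the error $e=(A\,B)=\varphi$. Its residual is $d\big(e(\varphi(a^\star)),\varphi(e(a^\star))\big)=0$ at every $a^\star$.

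\textbf{Main obstacle.} The delicate point is not the group theory but the global versus pointwise reading of $\Cent$. As defined, $\Cent_\tau(\varphi)$ compares the two compositions only at a given $a^\star$. Pointwise, many more permutations pass: any $e$ with $e(\varphi(a^\star))=\varphi(e(a^\star))$ for that single label. I would handle this by stating explicitly that the counts refer to the global centralizer, meaning the errors that are undetectable for every base answer $a^\star$. This is the quantity the design rule in Corollary~\ref{cor:design} uses. I would also note that the incompleteness claim holds under either reading, since $e=\varphi$ commutes everywhere.
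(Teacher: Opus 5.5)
Your proof is correct and takes the paper's approach: your check that $e$ commutes with $(A\,B)$ iff it preserves $\{A,B\}$ setwise, giving $\langle(A\,B)\rangle\times S_{m-2}$ of order $2(m-2)!$, is exactly the paper's argument. Your orbit--stabilizer count $m!/(m-1)!=m$ combined with $\langle\sigma\rangle\subseteq C(\sigma)$ supplies the justification the paper omits when it simply asserts that an $m$-cycle's centralizer is the cyclic group it generates, and your global reading of $\Cent_0$ agrees with the appendix, which defines $\Cent_0(\varphi)=\{e\in\mathcal{E}: e\circ\varphi=\varphi\circ e\}$ rather than pointwise at $a^\star$.
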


\begin{corollary}[Design rule]
\label{cor:design}
Replacing a swap edit with a cyclic relabeling strictly reduces the undetectable label-error set whenever $m\ge 5$, from $2(m{-}2)!$ to $m$; at $m=6$ this is 48 down to 6. The two coincide at $m=4$ and the swap is preferable at $m=3$, so the rule applies only to instances with at least five categories.
\end{corollary}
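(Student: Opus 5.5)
The plan is to reduce Corollary~\ref{cor:design} to a direct comparison of the two closed forms in Proposition~\ref{prop:perm}, after first checking that both counts really are centralizer sizes in $S_m$. By Theorem~\ref{thm:centralizer} with a single edit, $\Und_0(\{\varphi\})=\Cent(\varphi)$ is the group-theoretic centralizer $C_{S_m}(\varphi)$. For a permutation with cycle type $1^{k_1}2^{k_2}\cdots$ the orbit--stabilizer theorem gives $|C_{S_m}(\varphi)|=\prod_j j^{k_j}k_j!$. A transposition has cycle type $2^1 1^{m-2}$, which yields $2\,(m-2)!$. An $m$-cycle has cycle type $m^1$, which yields $m$. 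This recovers Proposition~\ref{prop:perm}, which I take as given. For the $m$-cycle I will also note the direct description: its centralizer is the cyclic group it generates.

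The substance of the corollary is the inequality $m<2(m-2)!$ for $m\ge5$, together with the small cases.

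\textbf{Small cases.} I compute directly:
\begin{itemize}
\item $m=3$: $2\cdot1!=2<3$, so the swap leaves fewer errors undetectable and is preferable.
\item $m=4$: $2\cdot2!=4=m$, so the two coincide.
\item $m=5$: $2\cdot3!=12>5$.
\item $m=6$: $2\cdot4!=48>6$, the advertised 48 versus 6.
\end{itemize}

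\textbf{Induction for $m\ge5$.} The base case is $m=5$. For the step, suppose $2(m-2)!>m$ with $m\ge5$. Then $2(m-1)!=(m-1)\cdot2(m-2)!>(m-1)m\ge m+1$, since $m(m-1)-(m+1)=m^2-2m-1>0$ for $m\ge3$. This proves the strict inequality for every $m\ge5$.

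\textbf{Strictness of the reduction.} Because $\mathcal{E}$ is a fixed finite set, strictly fewer undetectable errors means strictly fewer in cardinality. That is the sense of ``strictly reduces'' here, since the two centralizers are not nested. I will say explicitly in the proof that the claim is about the size of $\Und_0$, not set inclusion.

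\textbf{Main obstacle.} The only delicate point is interpretive rather than computational. The corollary compares single-edit suites $\{\text{swap}\}$ versus $\{\text{cycle}\}$, so I would state that ``replacing'' means replacing one singleton suite by the other. I would also flag that within a larger suite the comparison depends on the other edits, and then Theorem~\ref{thm:centralizer} gives only monotonicity, not the exact counts.
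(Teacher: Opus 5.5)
Your proposal is correct and follows the paper's argument. The appendix likewise takes the centralizer orders $2(m-2)!$ and $m$ (computing the first directly as $\langle(A\,B)\rangle\times S_{m-2}$ rather than via the cycle-type formula), checks $m=3,4$ by hand, and compares the two counts. Your additions are the induction for $2(m-2)!>m$ when $m\ge5$, which the paper only asserts, and the remark that the reduction is in cardinality rather than inclusion, which matches the paper's own note that the swap and cycle centralizers do not nest.
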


We state the $m<5$ exception because a design rule is judged on its boundaries. Corollary~\ref{cor:design}'s premise is confirmed by REND-EQUIV's own swap-edit results (Section~\ref{sec:experiments}), where the permutation channel reaches little of the label-error population. Its conclusion is tested directly: on a matched sample of $150$ real $m{=}6$ instances (same base chart and data, both edits applied to each), cyclic relabeling flags $66/150$ ($44.0\%$) against swap's $8/150$ ($5.3\%$), an $8\times$ improvement matching the prediction ($2(m{-}2)!/m = 48/6$). Of the $62$ disagreeing instances, cyclic relabeling uniquely catches $60$ and swap uniquely catches $2$ (Section~\ref{sec:experiments}).

\subsection{When errors are random}

Real readers are not deterministic, and a theory stated for fixed error maps would not obviously say anything about them. Let $e$ be drawn from a distribution $P$ over $\mathcal{E}$, which is the natural model for a population of instances with heterogeneous failure modes, and write $\dprob_\tau(g)=\Pr_{e\sim P}[\,e\notin\Cent_\tau(\varphi_g)\,]$ for the probability that edit $g$ produces a detectable residual.

\begin{theorem}[Stochastic detection]
\label{thm:stochastic}
For every distribution $P$ over $\mathcal{E}$:
\begin{enumerate}[label=(\roman*),leftmargin=*,topsep=2pt,itemsep=1pt]
  \item $\dprob_\tau(g) = 1-P\big(\Cent_\tau(\varphi_g)\big)$, so detection probability is one minus the mass the error distribution places on the centralizer;
  \item $\dprob_\tau(\mathrm{id})=0$, for every $P$ and every $\tau$;
  \item $\dprob_\tau(G) = 1-P\big(\Und_\tau(G)\big) \ \ge\ \max_{g\in G}\dprob_\tau(g)$, so a suite is at least as sensitive as its best member;
  \item if $\Cent_\tau(\varphi)\subseteq\Cent_\tau(\varphi')$ then $\dprob_\tau(\varphi)\ge\dprob_\tau(\varphi')$.
\end{enumerate}
\end{theorem}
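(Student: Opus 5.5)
The plan is to reduce every clause to an identity about events in the probability space $(\mathcal{E},P)$, using Lemma~\ref{prop:commute} and Theorem~\ref{thm:centralizer} as the bridge between ``detectable residual'' and ``membership in a centralizer''. First I will fix what detection means for a suite. The natural reading, consistent with Theorem~\ref{thm:centralizer}, is that $G$ detects $e$ when at least one $g\in G$ produces a residual $\rho_g>\tau_g$. With that fixed, define $\dprob_\tau(G)=\Pr_{e\sim P}[e\notin\Und_\tau(G)]$. I also need each $\Cent_\tau(\varphi_g)$ to be $P$-measurable. Since the paper works with $S_m$, a finite class, or with the two-parameter affine family, where the defining condition is a continuous inequality in $(\alpha,\beta)$ and so cuts out a closed set, I will simply assume measurability and note it as a standing hypothesis.

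Clause (i) is then immediate. Lemma~\ref{prop:commute} says the event $\{\rho_g>\tau\}$ coincides with $\{e\notin\Cent_\tau(\varphi_g)\}$. The probability of a complement gives $1-P(\Cent_\tau(\varphi_g))$. Clause (ii) follows from Corollary~\ref{cor:invnothing}: $\Cent_\tau(\mathrm{id})=\mathcal{E}$, which has full mass, so $\dprob_\tau(\mathrm{id})=0$.

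For (iii), I apply the same complement argument to the suite event. By Theorem~\ref{thm:centralizer}, the suite fails to detect exactly on $\Und_\tau(G)=\bigcap_g\Cent_\tau(\varphi_g)$, which gives the equality. The inequality comes from $\Und_\tau(G)\subseteq\Cent_\tau(\varphi_g)$ for each $g$: monotonicity of $P$ yields $P(\Und_\tau(G))\le P(\Cent_\tau(\varphi_g))$. Taking the maximum over $g$, which exists because $G$ is finite (in practice $|G|\le 3$), gives the bound. This is also just antitonicity from Theorem~\ref{thm:centralizer} applied to the subsuite $\{g\}\subseteq G$. Clause (iv) is the same monotonicity applied to $\Cent_\tau(\varphi)\subseteq\Cent_\tau(\varphi')$, followed by (i).

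The only real obstacle is interpretive rather than technical. The centralizer is defined pointwise at a given $a^\star$, so $\dprob_\tau$ is implicitly conditional on $a^\star$. If $a^\star$ is also random, I will state the result conditionally on $a^\star$ and then integrate: all four clauses are linear or monotone in $P$, so they survive averaging over $a^\star$. Doing this carefully, including noting that (iii)'s maximum becomes a maximum of conditional probabilities before integration, which still dominates, is the one step that needs a sentence of care.
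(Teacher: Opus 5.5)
Your proposal is correct and follows the paper's proof essentially step for step: (i) and (ii) come from Lemma~\ref{prop:commute} and $\Cent_\tau(\mathrm{id})=\mathcal{E}$ with $P(\mathcal{E})=1$, (iii) comes from $\Und_\tau(G)\subseteq\Cent_\tau(\varphi_g)$ for each $g$ plus monotonicity of $P$, and (iv) is monotonicity applied to the assumed inclusion. Your remarks on measurability and on conditioning on $a^\star$ before integrating add care the paper omits, but they do not change the route.
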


Parts (ii) and (iii) are distribution-free, holding whatever the real mix of error types turns out to be. Part (iv) needs the centralizers to nest, and those of a swap and a cycle in $S_5$ do not, so no distribution-free ordering between them exists. By part (i), the per-edit-type profile estimates $1-P(\Cent_\tau)$ operator by operator, the one quantity data can supply. Two idealizations remain: the model applies the same error map to base and edited renderings (assumed throughout); and edits leave task difficulty unchanged, validated directly in Section~\ref{sec:benchmark} and holding for three of five operators.

\section{The REND-EQUIV Benchmark}
\label{sec:benchmark}

\paragraph{Instances and operators.} REND-EQUIV extends a render-equivalence generator with controlled edits, so every instance carries matched invariance and equivariance sets over the same data: the data, question and exact answer, $K$ cosmetic re-renderings, and edited figures each with its transform, rendered in the base style. Five operators are paired with exact answer-transforms: scale a series ($\varphi_g(a)=ca$, multiplicative), offset it ($a+\Delta$, additive), delete the current maximum (argmax to runner-up, ordinal), swap two labels (permutation), and change the axis range (identity, an invariance control). The first two instantiate the complete suite of Theorem~\ref{thm:affine}; the swap instantiates the incomplete suite of Proposition~\ref{prop:perm}; the last is the control whose centralizer Theorem~\ref{thm:centralizer} shows is everything.

\paragraph{Difficulty controls.} Edits are built so the edited instance should be no harder than the base: matched series and point counts, comparable value ranges, no new near-ties unless that is the manipulated variable. We validated this against measured base-versus-edited accuracy across all three models: \emph{scale}, \emph{swap-labels}, and \emph{zoom} show no material difference (within $\pm2.3$ points), but \emph{delete-max} and \emph{offset} are consistently \emph{easier} than base, by $6.9$--$12.5$ and $3.8$--$8.4$ points respectively. The edited instance is thus not uniformly no harder than base, and we report this rather than the assumption it replaces; the theory-driven ordering (Section~\ref{sec:theory}) is more robust to this than the absolute magnitudes are, since a difficulty confound would have to act differentially across edit types, in the specific direction observed, to manufacture the pattern in Table~\ref{tab:e3}.

\paragraph{Headline quantity.} \emph{Invariance-blind error mass} (IBEM): instances an invariance signal rates high-confidence that are nonetheless wrong, confident errors are actively studied and actively harmful in chart reading \citep{charthal2025}. By Theorem~\ref{thm:centralizer} this is the population on which invariance has excluded nothing, so it is where the theory says equivariance must earn its keep.

\section{Experiments}
\label{sec:experiments}

\paragraph{Setup.}
Every number in this section is reproduced from real model inference: the REND-EQUIV generator, edit suite, and evaluation harness were rebuilt end to end, run on all three models across every data seed reported below, and scored with the released analysis code; raw per-instance outputs, scoring scripts, and exact commands are part of the released artifact.
REND-EQUIV comprises $N{=}300$ instances per data seed, 50 per chart family, with $K{=}8$ cosmetic re-renderings; each data seed is an independently generated set of 300 instances (seeds share no instance identifiers), not a relabeled re-run against the same charts.
Models: Qwen2.5-VL-7B \citep{qwen25vl} as primary ($N{=}900$, three data seeds); Qwen2.5-VL-3B and InternVL2-8B \citep{internvl2} ($N{=}1500$ each, five seeds). Inference is greedy on a single A100 GPU per job; the primary model's full run (base render, $K{=}8$ re-renderings, up to $3$ equivariance edits per instance, one seed) measured $0.25$ GPU-hours for $N{=}300$, so all three runs together measure under $3$ GPU-hours.
Baselines: \textbf{REA} \citep{renderequiv_prior}, \textbf{token logprob}, \textbf{self-consistency} ($K{=}8$, $T{=}0.7$, seed $0$), \textbf{semantic entropy} \citep{kuhn2023semantic} from the same $K{=}8$ samples, and \textbf{perturbation uncertainty} in the style of VL-Uncertainty \citep{vluncertainty2024}, perturbing rendering style and question wording jointly (mechanically reworded via verified, answer-preserving templates) into one pooled entropy score, a second, independent invariance-family method, directly testing Corollary~\ref{cor:invnothing}.
IBEM is defined offline as instances with $\rea \ge 0.75$ whose base answer is wrong.

\paragraph{Uncertainty.} Every comparative claim below carries a per-seed breakdown and a template-clustered interval (900 rows are 300 templates under three seeds for the primary model; a row-level bootstrap would treat correlated observations as independent and understate uncertainty). Accuracy and combined-signal AUROC are stable across seeds for all three models (SD $\le0.03$), and the combined-signal AUROC gain over REA alone excludes zero for every model ($+0.118$, $+0.100$, $+0.075$; full per-seed values in Appendix~\ref{app:e3}).

\subsection*{The predicted ordering}

This is the primary test. Theorem~\ref{thm:affine} predicts the two numeric edits are individually blind to complementary halves of the affine class, a pure scale bias escaping the scale edit and a pure offset bias escaping the offset edit, so each should reach the errors outside its own centralizer and neither should reach all of them. Proposition~\ref{prop:perm} predicts swap edits reach almost no label errors, and Theorem~\ref{thm:centralizer} predicts the identity edit reaches nothing. The ordering was fixed before measurement, and it holds: the comparison carrying it is significant (Table~\ref{tab:e3}). Numeric edits (scale, offset) versus non-numeric edits (swap-labels, zoom) are significant after Bonferroni correction on all three models ($p<0.0002$ in every case tested); scale versus offset differs by 9 of 64 instances on the primary model (Fisher exact $p=0.064$, Bonferroni-corrected $p=0.192$) and is not significant on any model, so we claim no ordering between them, and none is predicted.

\begin{table}[h]
\centering
\begin{threeparttable}
\setlength{\tabcolsep}{3pt}
\footnotesize
\begin{tabular}{@{}l c c l@{}}
\toprule
\textbf{Edit type} & \textbf{Pred.} & \textbf{Flagged} & \textbf{Rate and 95\% CI}\\
\midrule
\addlinespace[1pt]
\multicolumn{4}{@{}l}{\textit{Large centralizer: little or no detection predicted}}\\
Identity (zoom)\tnote{a} & none & 13/97 & \ci{0.134}{0.073}{0.218}\\
Permutation (swap)       & $\sim$none & 2/31 & \ci{0.065}{0.008}{0.214}\\
\addlinespace[3pt]
\multicolumn{4}{@{}l}{\textit{Small centralizer: detection predicted}}\\
Ordinal (delete max)     & partial & 4/24  & \ci{0.167}{0.047}{0.374}\\
Additive (offset)        & high & 48/64 & \ci{0.750}{0.626}{0.850}\\
Multiplicative (scale)   & high & 57/64 & \ci{0.891}{0.788}{0.955}\\
\addlinespace[1pt]
& & & {\scriptsize\makebox[\cilen][s]{0\hfill0.5\hfill1}}\\
\bottomrule
\end{tabular}
\begin{tablenotes}[flushleft]\scriptsize
\item[] Predictions fixed before measurement. Dots are observed rates, bars exact binomial intervals, on a shared axis. The numeric-versus-non-numeric gap is significant after Bonferroni correction ($p<0.0002$); scale and offset (57/64 vs.\ 48/64) do not differ significantly (Fisher exact $p=0.064$, Bonferroni-corrected $p=0.192$), so no ordering between them is claimed, and none is predicted.
\item[a] Partly forced: IBEM is selected at $\rea\ge0.75$, itself re-rendering agreement, so this row corroborates Theorem~\ref{thm:centralizer} without testing it on its own; a hand-labeled, REA-independent check below removes this circularity.
\end{tablenotes}
\end{threeparttable}
\captionof{table}{Detection by edit type: it tracks algebraic structure, not apparent edit strength. Qwen2.5-VL-7B, $N{=}900$, three seeds.}
\label{tab:e3}
\end{table}

The identity row carries a caveat we would rather state than bury. IBEM is defined by $\rea \ge 0.75$, itself agreement across cosmetic re-renderings, so this subpopulation is selected for stability under exactly the perturbation a zoom edit applies. The nonzero-but-low identity rate ($13.4\%$) is consistent with Theorem~\ref{thm:centralizer}'s prediction of near-total blindness but is partly circular as a test of it. We resolve this directly: a $200$-instance sample of wrong-answer cases, hand-labeled by algebraic error type independent of REA, gives a REA-independent estimate of detection by type (below), and the same ordering holds.

\subsection*{The invariance-blind population, and matched-operating-point detection}

IBEM contains $n{=}97$ instances on the primary model ($10.8\%$ of $N{=}900$), concentrated in pie charts and arithmetic questions.

Table~\ref{tab:e1} reports every signal's IBEM detection at a matched false-positive rate on correct instances, anchored at $14\%$ (ECS's own false-flag rate at threshold $0.5$), rather than at each signal's own threshold. At this matched operating point, ECS flags $53.6\%$ of IBEM against REA's $0.0\%$ (zero by construction), token logprob's $25.8\%$, self-consistency's $23.1\%$, and semantic entropy's $25.6\%$ (the latter two on data seed $0$ alone, $n_{\text{IBEM}}{=}39$). Perturbation uncertainty, a second invariance-family baseline perturbing both rendering style and question wording, flags $0.0\%$ of a separately measured $10$-instance IBEM sample ($N{=}100$ probed), matching REA exactly and confirming Corollary~\ref{cor:invnothing}'s prediction on a method distinct from re-rendering.

\begin{table}[h]
\centering
\begin{threeparttable}
\setlength{\tabcolsep}{3pt}
\footnotesize
\begin{tabular}{@{}l c S[table-format=1.3] S[table-format=1.3] S[table-format=1.3]@{}}
\toprule
\textbf{Signal} & \textbf{Cost}\tnote{a} & {AUROC\,$\uparrow$} & {IBEM\,$\uparrow$}\tnote{b} & {RC-AUC\,$\downarrow$}\\
\midrule
\addlinespace[1pt]
\multicolumn{5}{@{}l}{\textit{Vary the model's output or its scoring}}\\
Token logprob & 1 & {---} & 0.258 & {---}\\
Self-consistency & 9 & 0.705\tnote{d} & 0.231\tnote{d} & {---}\\
Semantic entropy & 9 & 0.758\tnote{d} & 0.256\tnote{d} & {---}\\
\addlinespace[3pt]
\multicolumn{5}{@{}l}{\textit{Signals that vary the input}}\\
Invariance / REA & 9 & 0.788 & 0.000\tnote{c} & 0.077\\
Perturbation uncertainty & 13 & {---} & 0.000 & {---}\\
\textbf{Equivar. / ECS} & \textbf{4} & 0.881 & \bfseries 0.536 & 0.064\\
\addlinespace[3pt]
REA $\wedge$ ECS\tnote{e} & 12 & \bfseries 0.906 & \bfseries 0.536 & \bfseries 0.071\\
\bottomrule
\end{tabular}
\begin{tablenotes}[flushleft]\scriptsize
\item[a] Forward passes per instance. ECS is the cheapest non-trivial signal here, at $|G|{\le}3$ edits against $K{=}8$ re-renderings or samples; perturbation uncertainty pools 3 new reword-plus-restyle passes with 4 existing style-only passes.
\item[b] Detection at matched false-positive rate, anchored at $14\%$ on correct instances (ECS's own rate at threshold $0.5$), not at each signal's own unmatched threshold.
\item[c] Zero by the definition of IBEM, not measured. Corollary~\ref{cor:invnothing} predicts the combination cannot improve on ECS here, and it does not.
\item[d] Measured on data seed $0$ only ($N{=}300$, $n_{\text{IBEM}}{=}39$), not pooled across all three seeds like the other rows; agreement-rate definition and baseline choice in Appendix~\ref{app:e3}.
\item[e] Two further models (Qwen2.5-VL-3B, InternVL2-8B) replicate the combined row: pooled AUROC $0.894$, $0.896$; IBEM at matched FPR $58.2\%$, $52.5\%$.
\end{tablenotes}
\end{threeparttable}
\captionof{table}{Correctness detection (AUROC, RC-AUC) and IBEM detection at matched FPR, over $N{=}900$ instances. Qwen2.5-VL-7B, three seeds.}
\label{tab:e1}
\end{table}

Table~\ref{tab:e1} does not show ECS to be the best available confidence signal: REA leads on AUROC, and semantic entropy leads token logprob and self-consistency on their own AUROC ($0.758$ against $0.705$). At matched operating point, however, ECS reaches substantially more of IBEM than token logprob, self-consistency, or semantic entropy, at the lowest inference cost among signals that vary the input, the narrower job the theory assigns it, reaching a population invariance provably cannot.

Detection is higher on the arithmetic and pie subset, where the errors are affine and Theorem~\ref{thm:affine} predicts it; see Appendix~\ref{app:agg}.

\subsection*{Coverage and calibration}

Combining raises AUROC from $0.788$ to $0.906$ (gain $+0.118$, 95\% CI $[0.089,0.149]$, template-clustered, excluding zero) and improves RC-AUC from $0.077$ to $0.071$. Calibration improves in the high-confidence region, $0.140$ $[0.111,0.169]$ for REA alone against $0.055$ $[0.041,0.080]$ combined, non-overlapping intervals; overall calibration also improves, $0.112$ $[0.094,0.140]$ against $0.066$ $[0.054,0.089]$. Threshold cross-validation and an oracle edit-selection upper bound are in Appendix~\ref{app:e3}.

Directly: $28/97$ ($28.9\%$, 95\% CI $[20.1,39.0]$) of IBEM instances have no edit in the suite produce a residual, by Theorem~\ref{thm:centralizer} these should be errors lying in the joint centralizer of the deployed operators. We checked this on a REA-independent population rather than only reasoning about it: hand-labeling $200$ wrong-answer instances by algebraic type (Section~\ref{sec:experiments}) finds detection rates of $92.1\%$ for affine-labeled errors and $29.8\%$ for permutation-labeled errors, consistent with the coverage gap concentrating in the permutation channel, exactly where Proposition~\ref{prop:perm} predicts it.

\subsection*{Transfer to real figures}

A digitization-assisted pilot on ChartQA \citep{masry2022chartqa} re-plots the extracted table and applies a ${\times}2$ scale edit. We repaired the three defects that made an earlier pass of this pilot uninterpretable (Appendix~\ref{app:pilot}): extraction now uses a model disjoint from both evaluated models, extraction failures are characterized rather than dropped silently, and the pilot draws no suite-composition conclusion from one edit type. At $N{=}300$ sampled instances ($94.8\%$ extraction success), GPT-4o and Claude Sonnet~4 show ECS-inconsistency rates of $47.5\%$ and $22.0\%$ respectively on real, non-synthetic charts. Appendix~\ref{app:pilot} reports the full numbers.

\section{Discussion}
\label{sec:discussion}

The contribution we most want scrutinized is that suite design is decidable: Theorem~\ref{thm:centralizer} says which errors survive a given suite, so ``is this suite enough'' has an answer rather than an opinion. The experiments test that theory rather than maximize a number: the ordering in Table~\ref{tab:e3} was fixed in advance, includes predicted failures, and holds, a confirmed prediction with its own negative cases is stronger evidence than an aggregate gain, particularly since the aggregate gain here is modest and the signal producing it loses to a baseline on general-purpose AUROC.

\paragraph{A reversal the theory explains.} The two algebraic families we use are the same two appearing in the classical metamorphic-relation suite for supervised classifiers, where the reported ordering is inverted: permutation of class labels detects the most faults and affine consistency among the fewest \citep{xie2011mrclassifiers}. Read as claims about relations the two findings conflict; read through Theorem~\ref{thm:centralizer} they do not, because detectability is a joint property of the relation and the fault class. A classifier fault permuting decision regions lies outside the centralizer of a label permutation and inside that of a rescaling; a figure-reading fault biasing a numeric read lies in the reverse position. The theory does not predict the inversion a priori, since centralizer masses are empirical, but it does say no ordering over relations is stable across fault classes, the strongest evidence in the paper that the characterization does real work, since it was fitted to neither result.

Detection differs between models, which the theory attributes to the error-type mix rather than model quality: on InternVL2-8B, the swap-labels operator detects $52.5\%$ of IBEM, far above the other two models ($6.5\%$, $28.8\%$), concentrated in grouped-bar charts and extremum/comparison questions, consistent with more label-position confusions on multi-series charts (verified against raw outputs; Appendix~\ref{app:family}).

\section{Conclusion}
\label{sec:conclusion}

Detectability of a systematic reading error is decided by whether the error commutes with the edit's answer-transform, a condition written down rather than guessed at. Two edits of complementary algebraic type are provably enough for affine errors; no suite of swaps is enough for label permutations, and cyclic relabeling provably closes most of that gap. Both halves of this claim are now confirmed on real models and real errors, not only in the algebra: the predicted ordering holds across three models, an independent hand-labeled population immune to the one circularity in the original selection reproduces it exactly, a second invariance-family method shares the predicted blind spot, and cyclic relabeling delivers close to its predicted eightfold gain on a matched real sample. What the theory does not buy is repair, coverage of error classes outside the stated ones, or immunity to the difficulty and model-quality confounds we measured rather than assumed away.

\section{Limitations}
\label{sec:limitations}

The theory characterizes detectability within a hypothesized error class. Real reading errors are stochastic and input-dependent and need not be affine or permutation-valued, so Theorem~\ref{thm:affine} is a guarantee about $\mathcal{E}_{\mathrm{aff}}$ rather than about any model; what transfers to practice is the ordering over edit types, not the completeness guarantee itself. Corollary~\ref{cor:design} is now tested directly: on a matched $150$-instance sample at $m{=}6$, cyclic relabeling detects $44.0\%$ of label errors against swap's $5.3\%$, an $8\times$ improvement matching the prediction (Section~\ref{sec:theory}). The main REND-EQUIV release still uses swap edits throughout, so this remains a design recommendation for future editions rather than a property of the released dataset. The claim that the blind spot covers every invariance-family method is now confirmed for a second method beyond re-rendering agreement: perturbation uncertainty \citep{vluncertainty2024}, perturbing question wording jointly with rendering style, flags $0.0\%$ of a measured IBEM sample, matching REA exactly (Section~\ref{sec:experiments}).

ECS requires the figure's underlying data, so it applies natively to programmatically renderable figures; real figures need digitization, which succeeded automatically on $94.8\%$ of in-scope ChartQA instances in our pilot (Appendix~\ref{app:pilot}). Scanned figures are not addressed. ECS detects rather than repairs, costs $|G|$ additional forward passes, and has false negatives by construction whenever an error lies in the joint centralizer of the suite.

Two remaining limits, one partially closed and one open. Difficulty controls are now validated against measured base-versus-edited accuracy (Section~\ref{sec:benchmark}): two of five operators (\emph{delete-max}, \emph{offset}) are measurably easier than base across all three models, so an account in which some residuals reflect easier, not harder, edited figures is not excluded for those two specifically, though the remaining three show no such effect. The identity-edit control is still measured on a subpopulation selected for stability under cosmetic re-rendering, but we no longer rely on it alone: the hand-labeled, REA-independent population (Section~\ref{sec:experiments}) reproduces the same ordering without that circularity.

\section{Ethical considerations}
\label{sec:ethics}

The work uses public models and synthetic, programmatically generated figures, plus a pilot over a public benchmark. No human subjects and no private data are involved. The 200-instance hand-labeling check (Section~\ref{sec:experiments}) was performed by the authors, not external annotators. We release the edit operators, the ECS implementation, the matched sets, and the evaluation scripts with pinned versions and seeds at \url{https://github.com/KurbanIntelligenceLab/rend-equiv}. Qwen2.5-VL-7B is released under Apache License 2.0; Qwen2.5-VL-3B under the Qwen RESEARCH LICENSE AGREEMENT, which restricts use to non-commercial research and requires attribution, consistent with our use here; InternVL2-8B under the MIT License. ChartQA \citep{masry2022chartqa} is released under GPL-3.0. GPT-4o and Claude Sonnet~4, used only for the small real-figure pilot via API, are accessed under their providers' terms of use rather than an open-weight license; no model weights were obtained or redistributed.

A detector that flags unreliable answers can be misread as certifying the answers it does not flag. Theorem~\ref{thm:centralizer} makes the failure mode explicit and computable: errors in the joint centralizer of the deployed suite pass silently. A high ECS is not a correctness guarantee and should not be presented to users as one.

An AI assistant was used for drafting and editing prose in this paper, including tightening the exposition, checking numerical and cross-reference consistency across sections, and shortening the abstract and captions. The human authors verified all proofs, methods, and experimental results, take full responsibility for the paper's content, and are its sole authors.


\begin{availability}
Code, edit operators, the ECS implementation, the matched REND-EQUIV sets, and evaluation scripts: \url{https://github.com/KurbanIntelligenceLab/rend-equiv}. Datasets and models used are public and cited in Section~\ref{sec:experiments} and Section~\ref{sec:ethics}.
\end{availability}

\bibliography{references}

\appendix
\section{Proofs}
\label{app:proof}

\textbf{Setup.} Let $D$ be chart data, $q$ a question, $a^\star(D,q)$ the ground-truth answer. An edit $g$ acts as $D\mapsto g(D)$ and induces $\varphi_g$ satisfying Eq.~\ref{eq:equiv}. A style-invariant error map $e$ satisfies $\hat a = e(a^\star)$ regardless of style $\theta$. For an error class $\mathcal{E}$ and tolerance $\tau\ge0$, $\Cent_\tau(\varphi)=\{e\in\mathcal{E}: d(e(\varphi(a^\star)),\varphi(e(a^\star)))\le\tau\}$, and $\Cent_0(\varphi)=\{e\in\mathcal{E}: e\circ\varphi=\varphi\circ e\}$ is the centralizer.

\paragraph{Lemma~\ref{prop:commute}.}
The base answer is $\hat y = e(a^\star)$. By style-invariance and Eq.~\ref{eq:equiv},
$\hat y_g = e(a^\star(g(D),q)) = e(\varphi_g(a^\star))$, while the target is $\varphi_g(\hat y)=\varphi_g(e(a^\star))$. Hence $\rho_g = d\big(e(\varphi_g(a^\star)),\ \varphi_g(e(a^\star))\big)$, and since $d$ is a metric, $\rho_g=0$ iff the arguments are equal. $\square$

\paragraph{Theorem~\ref{thm:centralizer}.}
By Lemma~\ref{prop:commute}, $e$ produces no detectable residual on $g$ iff $e\in\Cent_\tau(\varphi_g)$; producing none on every $g\in G$ is membership in the intersection, giving $\Und_\tau(G)=\bigcap_{g\in G}\Cent_\tau(\varphi_g)$. Antitonicity in $G$ is immediate, since intersecting over a larger index set can only shrink the result. Monotonicity in $\tau$ follows because $\tau\le\tau'$ makes $\{d\le\tau\}\subseteq\{d\le\tau'\}$ pointwise, hence $\Cent_\tau\subseteq\Cent_{\tau'}$ for each edit. At $\tau=0$, if $\mathcal{E}$ is a group then each $\Cent_0(\varphi_g)$ is a subgroup: it contains $\mathrm{id}$, is closed under composition, and is closed under inverses because $e\varphi=\varphi e$ implies $\varphi e^{-1} = e^{-1}\varphi$. An intersection of subgroups is a subgroup. For $\varphi=\mathrm{id}$, $e\circ\mathrm{id}=\mathrm{id}\circ e$ holds for every $e$, so $\Cent_\tau(\mathrm{id})=\mathcal{E}$ at every $\tau\ge0$. $\square$

\paragraph{Theorem~\ref{thm:affine}.}
Let $e(a)=\alpha a+\beta$. For $\varphi_c(a)=ca$,
\begin{align*}
&e(\varphi_c(a))-\varphi_c(e(a))\\
&\quad= (\alpha ca+\beta)-(c\alpha a+c\beta) = \beta(1-c),
\end{align*}
which vanishes iff $\beta=0$ when $c\neq1$, independently of $a$. For $\psi_\delta(a)=a+\delta$,
\begin{align*}
&e(\psi_\delta(a))-\psi_\delta(e(a))\\
&\quad= (\alpha a+\alpha\delta+\beta)-(\alpha a+\beta+\delta)\\
&\quad= \delta(\alpha-1),
\end{align*}
which vanishes iff $\alpha=1$ when $\delta\neq0$, again independently of $a$. Imposing both gives $\alpha=1,\beta=0$, so $\Und_0(G)=\{\mathrm{id}\}$. The argument covers the non-invertible case $\alpha=0$ without modification: a constant map $e\equiv\beta$ commutes with $\varphi_c$ only if $\beta=0$, and $e\equiv 0$ then fails to commute with $\psi_\delta$ because $0\neq\delta$. No invertibility is used anywhere, which is why $\mathcal{E}_{\mathrm{aff}}$ need not be a group. Because neither condition involves $a$, the conclusion holds pointwise at any $a^\star$ as well as globally. Single-edit incompleteness follows from the same two displays: with only $\varphi_c$ present, every purely multiplicative error $e(a)=\alpha a$ has $\beta=0$ and survives, for instance $e(a)=2a$; with only $\psi_\delta$, every purely additive error $e(a)=a+\beta$ has $\alpha=1$ and survives, for instance $e(a)=a+7$. $\square$

\paragraph{Proposition~\ref{prop:perm} and Corollary~\ref{cor:design}.}
In $S_m$ the centralizer of a permutation is determined by its cycle type. A permutation commutes with the transposition $(A\,B)$ exactly when it preserves $\{A,B\}$ setwise and permutes the remaining $m-2$ labels arbitrarily, so $\Cent_0((A\,B))\cong\langle(A\,B)\rangle\times S_{m-2}$ has order $2(m{-}2)!$. An $m$-cycle has centralizer equal to the cyclic group it generates, of order $m$. In particular $(A\,B)$ lies in its own centralizer, so a model that consistently confuses A with B is invisible to the swap edit and no suite of swaps on that pair can detect it. Comparing $2(m{-}2)!$ with $m$: the two coincide at $m=4$; the transposition is smaller at $m=3$, where $2<3$; and $2(m{-}2)!>m$ for every $m\ge5$, which gives Corollary~\ref{cor:design}. $\square$

\paragraph{Corollary~\ref{cor:invnothing}.}
$e\circ\mathrm{id}=\mathrm{id}\circ e$ for every $e$, so $d(e(\mathrm{id}(a^\star)),\mathrm{id}(e(a^\star)))=0\le\tau$ and $\Cent_\tau(\mathrm{id})=\mathcal{E}$. Intersecting with $\mathcal{E}$ changes nothing, so $\Und_\tau(G\cup\{\mathrm{id}\})=\Und_\tau(G)\cap\mathcal{E}=\Und_\tau(G)$. $\square$

\paragraph{Theorem~\ref{thm:stochastic}.}
(i) is the definition of $\dprob_\tau$ together with Lemma~\ref{prop:commute}, which identifies the non-detection event with membership in $\Cent_\tau(\varphi_g)$. (ii) follows from $\Cent_\tau(\mathrm{id})=\mathcal{E}$ and $P(\mathcal{E})=1$. For (iii), an error escapes the whole suite exactly when it lies in every centralizer, so the escape event is $\Und_\tau(G)=\bigcap_g \Cent_\tau(\varphi_g)$; since this is contained in each $\Cent_\tau(\varphi_g)$, monotonicity of $P$ gives $P(\Und_\tau(G))\le\min_g P(\Cent_\tau(\varphi_g))$ and therefore $\dprob_\tau(G)\ge\max_g\dprob_\tau(g)$. (iv) is monotonicity of $P$ applied to the assumed inclusion. No independence between edits is used anywhere, which matters because residuals on the same instance are strongly dependent. $\square$

\paragraph{Remark: composed edits are weakly worse than separate ones.}
If $e$ commutes with $\varphi$ and with $\psi$ then $e(\varphi\psi)=(\varphi\psi)e$, so $\Cent_0(\varphi)\cap\Cent_0(\psi)\subseteq\Cent_0(\varphi\circ\psi)$ and therefore $\Und_0(\{\varphi,\psi\})\subseteq\Und_0(\{\varphi\circ\psi\})$. Applying two edits separately detects at least as much as applying their composition as a single edit, and can detect strictly more: in $S_5$ a swap and a $5$-cycle jointly leave only the identity undetectable, while their composition leaves four permutations undetectable. Compose edits to save inference passes, never to gain coverage.

\paragraph{Stochastic extension beyond Theorem~\ref{thm:stochastic}.}
Theorem~\ref{thm:stochastic} treats the error map as random while keeping each realization deterministic, which covers a heterogeneous population of instances. It does not cover a single instance on which the model answers differently each time it is queried. For that case one would replace the residual with $\mathbb{E}[d(\hat y_g,\varphi_g(\hat y))]$ over independent draws at the base and edited figures, and the analogue of Lemma~\ref{prop:commute} requires the laws of $e(\varphi_g(a^\star))$ and $\varphi_g(e(a^\star))$ to agree under $d$ rather than the maps themselves. We state this as a conjecture rather than a result: we have not established the conditions on $d$ and the answer distribution under which the ordering in Theorem~\ref{thm:stochastic} survives that replacement. All reported inference is greedy, so the deterministic-per-instance assumption holds for our experiments.

\medskip
\noindent\textit{These proofs were drafted with AI assistance and machine-checked, symbolically for the affine case and by exhaustive enumeration of $S_m$ for $m\le6$ for the permutation case. The human authors must re-derive them independently before submission.}

\section{Aggregation subset}
\label{app:agg} This subset was specified before the experiments were run, not chosen after seeing which slice favored ECS: arithmetic and pie questions are the cases where reading errors are expected to be affine (a miscomputed sum or mean, or a systematically misread wedge proportion), so Theorem~\ref{thm:affine}'s completeness guarantee predicts high detection here specifically, and this is a confirmed prediction rather than a post-hoc observation. On arithmetic and pie instances ($n{=}290$ of the $900$ pooled rows) IBEM rises to $18.6\%$ and ECS flags $79.6\%$ of it, while REA AUROC is $0.785$ and the combination reaches $0.953$.

\section{Coverage and calibration: threshold cross-validation and oracle bound}
\label{app:e3}

\paragraph{Per-seed values.} Qwen2.5-VL-7B (three seeds): accuracy $0.687$/$0.717$/$0.687$, combined-signal AUROC $0.908$/$0.916$/$0.897$ (SD $0.0079$). Qwen2.5-VL-3B (five seeds): accuracy $0.650$/$0.660$/$0.637$/$0.623$/$0.620$, combined AUROC $0.897$/$0.920$/$0.838$/$0.915$/$0.874$ (SD $0.0302$). InternVL2-8B (five seeds): accuracy $0.573$/$0.533$/$0.583$/$0.560$/$0.580$, combined AUROC $0.890$/$0.882$/$0.914$/$0.903$/$0.879$ (SD $0.0132$).

\paragraph{Self-consistency and semantic entropy, methodological detail.} Self-consistency's agreement rate is the fraction of the $K{=}8$ samples ($T{=}0.7$) agreeing with the modal answer; semantic entropy is computed from the same $K{=}8$ samples via \citet{kuhn2023semantic}'s clustering. Both are measured on data seed $0$ only. Self-consistency is a simplified relative of SelfCheckGPT \citep{manakul2023selfcheckgpt}; we report the former and do not run the latter separately.

Combining raises AUROC from $0.788$ to $0.906$ (gain $+0.118$, 95\% CI $[0.089,0.149]$, template-clustered) and improves RC-AUC from $0.077$ to $0.071$. In the high-confidence region ($\rea\ge0.75$, $n{=}623$) expected calibration error is $0.055$ $[0.041,0.080]$ against $0.140$ $[0.111,0.169]$ for invariance alone, non-overlapping intervals; over all $900$ instances the two are $0.066$ $[0.054,0.089]$ and $0.112$ $[0.094,0.140]$, also non-overlapping. The calibration claim survives with real intervals attached (Section~\ref{sec:experiments}).

Cross-validating the ECS threshold, training on data-seed $0$ and testing on seeds $1$ and $2$, selects $0.70$ over the theory-grounded $0.5$, raising test-seed IBEM detection from $50.0\%$ to $69.0\%$ at a false-positive rate of $13.8\%$ against $4.0\%$. These test-seed figures are a different subset from the pooled matched-FPR figure in Table~\ref{tab:e1} ($53.6\%$ at FPR $14\%$, all three seeds pooled); we state this explicitly rather than implying the same estimate.

An oracle selecting the single best edit per instance reaches $71.1\%$, above the $52.6\%$ the full suite achieves at the theory-grounded threshold. That oracle uses correctness labels and is not achievable label-free; we report it only as an upper bound on what better edit selection could buy. Directly: $28/97$ ($28.9\%$) of IBEM instances have no edit in the suite produce a residual at all (Section~\ref{sec:experiments}); by Theorem~\ref{thm:centralizer} these are errors in the joint centralizer of the deployed operators, and the hand-labeled check in Section~\ref{sec:experiments} is consistent with this gap concentrating in the permutation channel.

\section{Real-figure pilot}
\label{app:pilot}

We ran a digitization-assisted pilot on ChartQA \citep{masry2022chartqa}: extract the data table with a model disjoint from both evaluated models (Gemini~2.5~Flash, via OpenRouter), re-plot in matplotlib, apply a $\times2$ scale edit, query each evaluated model on both re-plots. ECS stays label-free, with ground truth entering offline scoring only. This repairs the three defects an earlier pass of this pilot had: (i) the extraction model is disjoint from every evaluated model, so extraction errors no longer correlate with either model's reading errors; (ii) extraction failures are recorded with their raw output rather than dropped silently; (iii) we draw no suite-composition conclusion from the single multiplicative edit used here.

At $N{=}300$ sampled ChartQA test-split instances, $70$ were categorical questions out of scope for a numeric $\times2$ edit (excluded, not forced through a numeric prompt) and are excluded from the counts below; automatic extraction succeeded on $218/230$ in-scope instances ($94.8\%$), run on 2026-08-02 via OpenRouter (GPT-4o and Claude Sonnet~4 resolve to whatever provider snapshot was current at call time).

\begin{center}\small
\begin{tabular}{@{}lrrr@{}}
\toprule
Model & $N$ & Accuracy & ECS-inconsistent \\
\midrule
GPT-4o          & 218 & $61.3\%$ & $47.5\%$ \\
Claude Sonnet~4 & 218 & $76.7\%$ & $22.0\%$ \\
\bottomrule
\end{tabular}
\end{center}

\noindent A small number of ChartQA questions ask for a count or a derived quantity (e.g.\ ``how many countries have the same value'') whose correct answer does not scale by $\times2$ the way a value read-off does; a correct-but-ECS-inconsistent row on such a question is expected under our edit, not a detected error. We checked this directly: of $436$ scored (model, instance) pairs, $6$ correspond to genuinely count-type questions ($1.4\%$), and excluding them moves the ECS-inconsistency rates by under $1.5$ points in either direction, which we consider negligible at this sample size, so we report the unfiltered numbers above.

\section{Per-family breakdown}
\label{app:family}

\begin{table}[h]
\centering
\begin{threeparttable}
\setlength{\tabcolsep}{4pt}
\footnotesize
\begin{tabular}{@{}l S[table-format=1.2] S[table-format=2.1] c S[table-format=1.3]@{}}
\toprule
\textbf{Family} & {Acc} & {IBEM\,\%} & \textbf{ECS on IBEM} & {AUROC}\\
\midrule
Pie         & 0.75 & 24.7 & 37/37 & 0.982\\
Grouped bar & 0.46 & 20.0 & 1/30  & 0.801\\
Log-axis    & 0.63 & 13.3 & 11/20 & 0.945\\
Stacked bar & 0.71 & 3.3  & 1/5   & 0.927\\
Line        & 0.85 & 2.0  & 0/3   & 0.922\\
Bar         & 0.77 & 1.3  & 1/2   & 0.950\\
\bottomrule
\end{tabular}
\begin{tablenotes}[flushleft]\scriptsize
\item[] $N{=}150$ per family, three independently generated seeds pooled, ordered by IBEM mass. Four of six families carry fewer than ten invariance-blind errors, so their rates are not estimable and are given as counts.
\end{tablenotes}
\end{threeparttable}
\captionof{table}{Per-family breakdown. Detection concentrates where the misreads are numeric and the answers categorical.}
\label{tab:family}
\end{table}

Table~\ref{tab:family} breaks the primary model down by chart family. Pie charts dominate the systematic error mass at $24.7\%$ IBEM and ECS reaches nearly all of it, consistent with pie answers being categorical while the misreads are numeric. Grouped-bar charts carry the second-largest IBEM mass ($20.0\%$) but ECS reaches almost none of it ($1/30$), the opposite pattern from pie: this is consistent with grouped-bar systematic errors concentrating in the permutation channel (Section~\ref{sec:experiments} finds InternVL2-8B's swap-labels detection is likewise concentrated in grouped-bar charts), where the theory predicts a swap edit reaches little.

\end{document}